\title{Tractable Representations for Convergent Approximation of Distributional HJB Equations}
\author{
Julie Alhosh\\
School of Computer Science\\
McGill University\\
Montr\'eal, Qu\'ebec, Canada\\
\texttt{julie.alhosh@mail.mcgill.ca} \\
\And
Harley Wiltzer \\
School of Computer Science\\
Mila, McGill University\\
Montr\'eal, Qu\'ebec, Canada\\
\texttt{harley.wiltzer@mail.mcgill.ca}\\
\And
David Meger\\
Centre for Intelligent Machines\\
School of Computer Science\\
McGill University\\
Montr\'eal, Qu\'ebec, Canada\\
\texttt{dmeger@cim.mcgill.ca} \\
}
\newtheorem{theorem}{Theorem}[section]
\newtheorem{assumption}{Assumption}
\newtheorem{definition}{Definition}[section]
\acrodef{MDP}{Markov decision process}
\acrodef{RL}{reinforcement learning}
\acrodef{CTRL}{continuous-time \ac{RL}}
\acrodef{HJB}{Hamilton-Jacobi-Bellman}
\acrodef{DHJB}{distributional \ac{HJB}}
\acrodef{SHJB}{statistical \ac{HJB}}
\newcommand{\that}{\hat{\ensuremath\tau}_i}
\newcommand{\thatp}{\hat{\ensuremath\tau}_{i+1}}
\newcommand{\finv}{\ensuremath \F^{-1}(x, \that)}
\newcommand{\finvone}{\ensuremath \F^{-1}(x, \hat{\tau}_1)}
\newcommand{\finvp}{\ensuremath \F^{-1}(x, \thatp)}
\newcommand{\heaviside}[1]{\vartheta_{#1}}
\newcommand{\N}{\ensuremath N}
\newcommand{\xz}{(x,z)}
\newcommand{\sxz}{(\vec{s}(x),z)}
\newcommand{\F}[1][\pi]{\ensuremath F_{\eta^{#1}}}
\newcommand{\K}{\mathrm{K}}
\newcommand{\imputation}[1][N]{\Phi_{#1}}
\newcommand{\imputationcdf}[1][N]{\imputation[#1]^{\mathsf{cdf}}}
\newcommand{\Kspace}[1][\imputation]{\K_{#1}^x}
\newcommand{\Kstat}[1][\imputation]{\K_{#1}^s}
\newcommand{\probset}[1]{\mathscr{P}(#1)}
\begin{document}

\maketitle

\begin{abstract}
In \ac{RL}, the long-term behavior of decision-making policies is evaluated based on their average returns.
Distributional \ac{RL} has emerged, presenting techniques for learning return \emph{distributions}, which provide additional statistics for evaluating policies, incorporating risk-sensitive considerations.
When the passage of time cannot naturally be divided into discrete time increments, researchers have studied the \emph{\ac{CTRL}} problem, where agent states and decisions evolve continuously. In this setting, the \ac{HJB} equation is well established as the characterization of the expected return, and many solution methods exist.
However, the study of distributional \ac{RL} in the continuous-time setting is in its infancy.
Recent work has established a \emph{\ac{DHJB}} equation, providing the first characterization of return \emph{distributions} in \ac{CTRL}.
These equations and their solutions are intractable to solve and represent exactly, requiring novel approximation techniques.
This work takes strides towards this end, establishing conditions on the method of parameterizing return distributions under which the \ac{DHJB} equation can be approximately solved.
Particularly, we show that under a certain topological property of the mapping between statistics learned by a distributional \ac{RL} algorithm and corresponding distributions, approximation of these statistics leads to close approximations of the solution of the \ac{DHJB} equation.
Concretely, we demonstrate that the \emph{quantile representation} common in distributional \ac{RL} satisfies this topological property, certifying an efficient approximation algorithm for continuous-time distributional \ac{RL}.
\end{abstract}

\keywords{
Distributional RL, Continuous-Time RL
}

\acknowledgements{
We thank Marc Bellemare, Jesse Farebrother, Nate Rahn, and Hanna Yurchyk for their helpful feedback.
}  

\startmain 

\section{Introduction}\label{sec:intro}
Reinforcement Learning (RL) methods classically focus on evaluating policies by the returns they earn in expectation \citep{sutton2018reinforcement}.
The field of \emph{distributional RL} \citep{distributionalperspective,bellemare2023distributional} provides techniques for evaluating policies on the basis of their return \emph{distributions}.
Such techniques have demonstrated impressive empirical performance in various deep RL benchmarks \cite{distributionalperspective,wurman2022outracing}, and additionally present new possibilities for designing \emph{risk-sensitive} RL agents.

Research in \ac{RL} has traditionally focused on discrete-time problems, where time stops until a decision is made at fixed discrete timesteps.
In contrast, \acf{CTRL} is concerned with settings where decisions and state transitions evolve continuously in time.
In \ac{CTRL}, the expected return of a policy is governed by a differential equation known as the \acf{HJB} equation, which has been studied extensively in the \ac{RL} \citep{munos1997reinforcement} and optimal control \citep{fleming2006controlled} literature.
Recent works \citep{wiltzer2021evolution,harley,halperin2024distributional,wiltzer2024action} have studied distributional RL in the continuous-time setting. Particularly, a \emph{\acf{DHJB}} equation was introduced \citep{wiltzer2021evolution,harley}, which characterizes the CDF of return distributions in the \ac{CTRL} setting.
However, methods for solving the \ac{DHJB} equation are not as well understood.

In this work, we establish conditions under which solutions to the \ac{DHJB} equation can be approximated tractably and efficiently using familiar gradient-based iterative update schemes.
Since return distributions have infinitely-many degrees of freedom, they cannot be represented exactly on a computer, and therefore solutions to the \ac{DHJB} equation are intractable.
In response to this, Wiltzer et al. \citep{harley} introduced a \emph{\acf{SHJB} loss} as an objective for approximately solving the \ac{DHJB} equation by optimizing a finite set of return distribution statistics.
However, until this work, it has not been known whether minimization of the \ac{SHJB} loss truly yields close approximations to ground truth return distributions.
We make the following contributions:
\begin{enumerate}
    \item
    We prove that, as long as the \emph{imputation strategy}
    ---that is, the mapping from statistics to return distributions---satisfies a certain topological property, minimizing the \ac{SHJB} loss yields convergent approximations of the return distributions;
    \item We demonstrate that the \emph{quantile} distribution representation \citep{quantileregression} and its corresponding imputation strategy satisfy this topological property, providing a principled loss function for continuous-time distributional RL.
\end{enumerate}

\section{Background}
In this section, we briefly summarize the relevant prior results of \citep{harley}, in the continuous-time infinite-horizon discounted return setting. The state space is denoted $\mathcal{X}$ and is assumed to be bounded and Euclidean, and the discount factor is $\gamma\in(0,1)$.
We assume the reward function $r$ is bounded, so that returns are confined to a bounded set $\mathcal{R} = [V_\min, V_\max]$.

We will consider a fixed policy $\pi$, and our goal is to estimate its \emph{return distribution function} 
$
{
\eta^\pi:\mathcal{X}\to\probset{\mathcal{R}},
}
$
which is the distribution of
\begin{equation}\label{eq:rdf}
\int_0^\infty\gamma^tr(X_t)\mathrm{d}t =: G^\pi(x)\sim\eta^\pi(x), \quad X_0 = x, \quad \mathrm{d}X_t = \mu_\pi(X_t)\mathrm{d}t + \sigma_\pi(X_t)\mathrm{d}B_t. 
\end{equation}
Here, $(B_t)_{t\geq 0}$ is a Brownian motion, and $\mu_\pi, \sigma_\pi$ describe the stochastic differential equation governing the agent's state under the policy $\pi$.
Recall that the \emph{value function} $V^\pi$ given by
\begin{equation}
V^\pi:x\mapsto\mathbf{E}[G^\pi(x)]
\end{equation} 
is characterized by the \acf{HJB} equation \cite{fleming2006controlled},
\begin{equation}
    \label{eq:hjb}
    \langle\nabla_xV^\pi(x), \mu_\pi(x)\rangle + \log\gamma V^\pi(x) + r(x) + \frac{1}{2}\mathrm{Tr}\left(\sigma_\pi(x)^\top(\nabla_x^2V^\pi(x))\sigma_\pi(x)\right)= 0.
\end{equation}

Henceforth, we will write $\F(x, \cdot)$ to denote the CDF of $\eta^\pi(x)$.
To begin, we formalize how imputation strategies encode distribution approximations.

\begin{definition}[Imputation Strategy]
An \emph{imputation strategy} is a map 
$
\imputation:\mathbf{R}^\N\to\probset{\mathcal{R}}
$
which maps a set of $\N$ statistics into a probability measure with those statistics.
Additionally, for every imputation strategy $\imputation$, we define a map 
$
\imputationcdf:\mathbf{R}^\N\times\mathcal{R}\to[0,1]
$
given by 
\begin{equation}
\imputationcdf(s, z) = [\imputation(s)]([V_\min, z]).
\end{equation}
\end{definition}

As an example, we may consider a \emph{Gaussian} imputation strategy, such as
\begin{equation}
\imputation[2]:(\mu, \sigma^2)\mapsto\mathcal{N}(\mu, \sigma^2),
\end{equation} 
which is a valid imputation strategy when $\mu, \sigma^2$ are interpreted as mean and variance statistics.
Alternatively, for a fixed subset $\{\xi_i\}_{i=1}^N\subset\mathcal{R}$, the map 
\begin{equation}
\imputation: p\mapsto \sum_{i=1}^Np_i\delta_{\xi_i}
\end{equation} 
is an imputation strategy for statistics $p$ in the simplex $\Delta_N$, which maps a set of probabilities to a categorical distribution on the support $\{\xi_i\}_{i=1}^N$.
Generally, $\N$ controls the \emph{resolution} of our distribution approximations; as $\N$ increases, we hope to achieve higher fidelity approximations, and to more closely solve the \ac{DHJB} equation.

Next, we recall some useful definitions regarding generalized functions.

\begin{definition}[Schwartz Class] \label{def:schwartz}
Let $X$ be a normed space.
A Schwartz class is a class $\mathcal{S}$ of rapidly decaying-smooth functions 
\begin{equation}
\mathcal{S} = \{ f \in C^\infty (X;\mathbf{R}) : \  \sup_{x \in X} \big[(1 + \Vert x \Vert^k) |f^{(m)} (x)|\big] < \infty \quad \forall k, m \in \mathbf{N}\}.
\end{equation}
\end{definition}

\begin{definition}[Tempered Distribution]\label{def:tdist} 
A tempered distribution is an element of the topological dual $\mathcal{S}'$ of the Schwartz class $\mathcal{S}$.
That is, a tempered distribution is a linear map 
$
\varrho:\mathcal{S}\to\mathbf{R}.
$
\end{definition}

It is important to note that ``Distribution'', in the context of a tempered distribution, refers to a type of generalized function, and not a \emph{probability} distribution.

This formalism allows us to generalize notions of solutions to differential equations, permitting solutions where derivatives are tempered distributions. For a differential operator $\mathscr{L}$, we say a differential equation $\mathscr{L}f = 0$ \emph{in the distributional sense} if 
\begin{equation}
\int\phi(y)(\mathscr{L}f)(y)\mathrm{d}y = 0 \quad \forall \phi\in\mathcal{S}.
\end{equation}

We are now able to state the main regularity conditions assumed on the imputation strategy.
\begin{definition}[Statistical Smoothness \cite{harley}]\label{def:statsmooth} 
An imputation strategy $\imputation$ is said to be \emph{statistically smooth} if $\imputation(s)$ is a tempered distribution for each $s \in \mathbf{R}^\N$.
Likewise, a return distribution function 
$
{
\eta:\mathcal{X}\to\probset{\mathcal{R}}
}
$
is said to be statistically smooth if its state-conditioned CDF, $\F(x,\cdot)$ is a tempered distribution for each $x \in \mathcal{X} $ and $\F(\cdot,z)$ is twice continuously differentiable almost everywhere for each $z\in \mathcal{R}$.
\end{definition}

\begin{assumption}\label{assumption:1}
    At every state $x \in \mathcal{X}$, $\eta^\pi(x)$ is absolutely continuous with respect to the Lebesgue measure.
\end{assumption} 
    
\begin{assumption}\label{assumption:2}
    The mapping 
    $
    \xz \mapsto \F\xz
    $
    is twice differentiable over $\mathcal{X} \times \mathcal{R}$ almost everywhere, and its second partial derivatives are continuous almost everywhere.
\end{assumption} 

Having established these technical assumptions, we can state the \ac{DHJB} equation, which characterizes return distributions in \ac{CTRL}.
Finally, we will recall the \ac{SHJB} loss, which is our central focus.

\begin{theorem}[\ac{DHJB} Equation, \cite{harley}]
    \label{thm:dhjb}
    Under assumptions \ref{assumption:1} and \ref{assumption:2}, for almost every $(x, z)\in\mathcal{X}\times\mathcal{R}$, the \acf{DHJB} equation 
    \begin{equation}
    (\mathscr{L}\F)(x, z) = 0
    \end{equation}
    holds in the distributional sense, where
    \begin{equation}
        \label{eq:dhjb}
        (\mathscr{L}f)(x, z)
        =\langle\nabla_xf(x, z), \mu_\pi(x)\rangle - (r(x) + z\log\gamma)\frac{\partial}{\partial z}f(x, z) + \frac{1}{2}\mathrm{Tr}\left(\sigma_\pi(x)^\top (\nabla_x^2f(x, z))\sigma_\pi(x)\right).
    \end{equation}
\end{theorem}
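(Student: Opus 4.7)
The plan is to derive the \ac{DHJB} equation by identifying a bounded martingale associated with the return CDF, applying It\^o's formula to it, and then relaxing the pointwise identity to the distributional sense via a mollification argument. The backbone of the argument is classical Feynman-Kac reasoning; the novelty lies in tracking the CDF rather than an expectation.

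For the martingale step, fix a threshold $z\in\mathcal{R}$ and introduce the ``discounted residual budget'' process
\begin{equation*}
Y_t := \gamma^{-t}\!\left(z - \int_0^t \gamma^s r(X_s)\,\mathrm{d}s\right),
\qquad \mathrm{d}Y_t = -\bigl(Y_t\log\gamma + r(X_t)\bigr)\,\mathrm{d}t.
\end{equation*}
A straightforward rewriting of $\{G^\pi(x) \leq z\}$ shows that, conditional on $\mathcal{F}_t$, the event $\{G^\pi(x)\leq z\}$ coincides with $\{\int_t^\infty \gamma^{s-t}r(X_s)\,\mathrm{d}s \leq Y_t\}$. Combined with the strong Markov property for $X_t$, this yields $\mathbf{E}[\mathbf{1}\{G^\pi(x)\leq z\} \mid \mathcal{F}_t] = \F(X_t, Y_t)$, so $M_t := \F(X_t, Y_t)$ is a uniformly bounded martingale with $M_0 = \F(x,z)$.

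Next, assuming momentarily that $\F$ were $C^{1,2}$, applying It\^o's formula to $M_t$ would decompose $\mathrm{d}M_t$ into a stochastic integral against $\sigma_\pi(X_t)\,\mathrm{d}B_t$ plus a drift
\begin{equation*}
\left[\langle\nabla_x\F, \mu_\pi(X_t)\rangle - \bigl(Y_t\log\gamma + r(X_t)\bigr)\tfrac{\partial}{\partial z}\F + \tfrac{1}{2}\mathrm{Tr}\!\bigl(\sigma_\pi^\top\nabla_x^2\F\,\sigma_\pi\bigr)\right]\mathrm{d}t.
\end{equation*}
The martingale property forces this drift to vanish, and evaluating at $t=0$ (where $Y_0 = z$, $X_0 = x$) produces $(\mathscr{L}\F)(x,z) = 0$ exactly in the form of equation \eqref{eq:dhjb}.

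The main obstacle, and what forces the distributional reading, is that Assumption~\ref{assumption:2} only guarantees the needed derivatives almost everywhere, so It\^o cannot be invoked directly on $\F$. The plan is to mollify: set $\F_\varepsilon(x,\cdot) := \F(x,\cdot) * \rho_\varepsilon$ for a smooth approximate identity $\rho_\varepsilon$, run the martingale/It\^o argument on $\F_\varepsilon$, and integrate the resulting identity against an arbitrary $\phi\in\mathcal{S}$. Sending $\varepsilon\to 0$, Assumption~\ref{assumption:1} ensures $\F(x,\cdot)$ is atomless so boundary terms from integration by parts vanish, Assumption~\ref{assumption:2} together with dominated convergence (using the boundedness of $\mathcal{R}$ and the rapid decay of $\phi$) transfers the a.e.\ identity to the duality pairing, giving $\int\phi(z)(\mathscr{L}\F)(x,z)\,\mathrm{d}z = 0$ for every test function. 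Carefully matching the mollification scale with the a.e.\ regularity in Assumption~\ref{assumption:2}, uniformly in $\varepsilon$, is the delicate step; once that is done, the theorem follows.
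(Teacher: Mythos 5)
You should first note that this paper does not actually prove Theorem~\ref{thm:dhjb}: it is quoted as background from \cite{harley}, so there is no in-paper proof to compare against, and I can only judge your sketch on its own terms. Its skeleton does mirror the derivation used in the cited work: the process $Y_t$ and its drift $-(Y_t\log\gamma + r(X_t))$ are correct, the identity $\mathbf{E}[\mathbf{1}\{G^\pi(x)\leq z\}\mid\mathcal{F}_t]=\F(X_t,Y_t)$ does give a bounded martingale via the Markov property, and the It\^o drift you write down reproduces exactly the operator in \eqref{eq:dhjb}. If $\F$ were classically $C^2$ in $x$ and $C^1$ in $z$, your argument would be complete.

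The genuine gap is the regularity-relaxation step, which is precisely the content of the theorem and which your sketch defers. Two concrete problems: (a) you mollify only in $z$, but the missing classical regularity is chiefly in $x$ (Assumption~\ref{assumption:2} gives second $x$-derivatives only almost everywhere, and $\sigma_\pi$ may be degenerate, so the diffusion can sit on the bad set); note that under Assumption~\ref{assumption:1} the $z$-direction is already absolutely continuous, so smoothing in $z$ attacks the wrong variable and It\^o still cannot be applied to $\F_\varepsilon(X_t,Y_t)$. (b) Even granting smoothness, $\F_\varepsilon(X_t,Y_t)$ is not a martingale: the martingale corresponding to the shifted threshold $z-u$ is $\F(X_t,\,Y_t-\gamma^{-t}u)$, because a shift of the initial condition is transported by the discounting flow with the time-dependent factor $\gamma^{-t}$, so the constant-shift convolution $\int\rho_\varepsilon(u)\F(X_t,Y_t-u)\,\mathrm{d}u$ is not a mixture of martingales and the ``drift must vanish'' conclusion is not available for it. (Also, the vanishing of boundary terms has nothing to do with atomlessness; it comes from the decay of $\phi$ and the boundedness of $\mathcal{R}$.) A route that avoids both issues, and matches how the ``distributional sense'' qualifier is meant, is to never apply It\^o to $\F$ at all: keep the exact martingale identity $\F(x,z)=\mathbf{E}[\F(X_h,Y_h)]$ over a short horizon $h$, pair with $\phi\in\mathcal{S}$ in $z$ \emph{before} expanding, push all $z$-derivatives onto $\phi$ by duality, use Assumption~\ref{assumption:2} only for the a.e.\ second-order expansion in $x$, and then send $h\to 0$ for almost every $(x,z)$. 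As written, your proof stops exactly where the difficulty begins, so it is incomplete rather than wrong in direction.
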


The characterization of return distributions from \Cref{thm:dhjb} gives rise to a \emph{\acf{SHJB}} loss for evaluating finitely-parameterized return distribution approximations.
\begin{theorem}[\ac{SHJB} Loss, \cite{harley}]
\label{thm:shjb}
Let assumptions \ref{assumption:1} and \ref{assumption:2} hold, and let $\imputation$ be a statistically smooth imputation strategy.
If 
\begin{equation}
\F : (x, z) \mapsto \imputationcdf(\vec{s}(x), z)
\end{equation} 
for a differentiable \emph{statistics function} $\vec{s}:\mathcal{X}\to\mathbf{R}^\N$, then the \ac{SHJB} loss $\mathcal{L}_S(\vec{s}, \imputationcdf)$ vanishes, where
\begin{equation}\begin{split}
    \mathcal{L}_S(\vec{s},\imputationcdf) = \big[ \nabla_{\vec{s}(x)} \imputationcdf(\vec{s}(x),z)^\top \vec{s}(x) \mu_\pi(x)      & -(r(x) + z\log\gamma) \frac{\partial}{\partial z}\imputationcdf(\vec{s}(x),z)\\ &+\frac{1}{2} \mathrm{Tr}(\sigma_\pi(x)^\top (\Kspace\xz+\Kstat\xz) \sigma_\pi(x)) \big]^2,
\end{split}\end{equation}

and denoting by $J_x$ the Jacobian with respect to $x$,
\begin{equation*}\begin{split}
    \Kspace \xz = \sum^\N_{k=1} \frac{\partial}{\partial s_k(x)} \imputationcdf\sxz \nabla^2_{x} \vec{s}_k(x)
    \hspace{.5em}\mathrm{and}\hspace{.5em}
    \Kstat \xz =  J_x \vec{s}(x) ^\top(\nabla^2_{\vec{s}} \imputationcdf\sxz) J_x \vec{s}(x).
\end{split}\end{equation*}
\end{theorem}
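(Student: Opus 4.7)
The plan is to observe that $\mathcal{L}_S(\vec{s},\imputationcdf)(x,z)$ is, by construction, the square of $(\mathscr{L}\F)(x,z)$ after the ansatz $\F(x,z) = \imputationcdf(\vec{s}(x),z)$ is substituted into the DHJB operator of \Cref{thm:dhjb} and every $x$-derivative is expanded via the chain rule. The proof therefore reduces to (i) carrying out the chain-rule expansion of each of the three terms of $\mathscr{L}\F$ and matching them against the terms appearing in the bracket of $\mathcal{L}_S$, and (ii) invoking \Cref{thm:dhjb} to kill that bracket.

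The chain-rule calculation itself is mechanical. The drift term becomes
\begin{equation*}
\langle\nabla_x\F(x,z),\mu_\pi(x)\rangle = \nabla_{\vec{s}}\imputationcdf(\vec{s}(x),z)^\top J_x\vec{s}(x)\,\mu_\pi(x)
\end{equation*}
by the first-order chain rule; the reward/discount term $-(r(x)+z\log\gamma)\,\partial_z\F(x,z)$ is unchanged since no $x$-derivative acts on $\F$; and the second-order chain rule applied to the composition $x\mapsto\imputationcdf(\vec{s}(x),z)$ yields
\begin{equation*}
\nabla_x^2\F(x,z) = \sum_{k=1}^{\N}\frac{\partial\imputationcdf}{\partial s_k}\sxz\,\nabla_x^2\vec{s}_k(x) + J_x\vec{s}(x)^\top\bigl(\nabla_{\vec{s}}^2\imputationcdf\sxz\bigr)J_x\vec{s}(x),
\end{equation*}
which is, term-for-term, $\Kspace\xz + \Kstat\xz$. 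Plugging all three identities into $\mathscr{L}\F$ produces exactly the quantity bracketed in $\mathcal{L}_S$, so $\mathcal{L}_S(\vec{s},\imputationcdf)(x,z) = \bigl((\mathscr{L}\F)(x,z)\bigr)^2$, and the theorem follows from $(\mathscr{L}\F)(x,z) = 0$.

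The delicate point I expect to need the most care is the transition from the \emph{distributional} equality of \Cref{thm:dhjb} to the \emph{pointwise} vanishing needed inside the square. Statistical smoothness of $\imputation$ makes $\imputationcdf(\vec{s}(x),\cdot)$ a tempered distribution in $z$ for every $x$, and \Cref{assumption:2} together with the differentiability of $\vec{s}$ guarantees that the $x$-derivatives used above agree almost everywhere with their classical counterparts, so the chain-rule identities are valid a.e. on $\mathcal{X}\times\mathcal{R}$. Pairing $\mathscr{L}\F = 0$ against arbitrary test functions $\phi\in\mathcal{S}$ and restricting to the a.e.\ smooth region, the fundamental lemma of the calculus of variations upgrades this to the pointwise vanishing of the bracketed expression almost everywhere, whence its square vanishes.
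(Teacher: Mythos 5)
The paper itself gives no proof of \Cref{thm:shjb} --- it is imported verbatim from the cited work \citep{harley} --- so there is nothing internal to compare against; your derivation (substitute the ansatz $\F(x,z)=\imputationcdf(\vec{s}(x),z)$ into the operator of \Cref{thm:dhjb}, expand via the first- and second-order chain rules so that $\nabla_x^2\F$ becomes exactly $\Kspace\xz+\Kstat\xz$, and conclude from $(\mathscr{L}\F)(x,z)=0$) is precisely the intended argument, and your chain-rule identities are correct (indeed they quietly fix the paper's typographical omission of the Jacobian $J_x\vec{s}(x)$ in the drift term). Your handling of the distributional-to-pointwise step is also sound: since the hypothesis forces $\imputationcdf(\vec{s}(x),\cdot)$ to coincide with the true CDF, Assumptions \ref{assumption:1} and \ref{assumption:2} make the bracketed quantity a locally integrable function equal a.e.\ to its classical expression, so vanishing against all test functions upgrades to a.e.\ pointwise vanishing and the square is legitimately zero.
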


\section{Convergence  of the Statistical HJB Loss} \label{sec:convergence}
In this section, specifically in \Cref{thm:HJBlossconv}, we prove that the \ac{SHJB} loss $\mathcal{L}_S$ converges to $0$ as the number of statistics $\N$ approaches $\infty$, when the imputation strategy $\imputation$ satisfies a certain topological property.
As mentioned in \cref{sec:intro}, both $\imputationcdf$ and the return distribution CDF $\F$ are assumed to be tempered distributions.

\begin{theorem}[Convergence of \Ac{SHJB} Loss]\label{thm:HJBlossconv}
Let $\{\Phi_\N\}_{\N=0}^\infty$ be a sequence of statistically smooth imputation strategies, and let $\vec{s}: \ \mathcal{X} \to \mathbb{R}^\N$ be a twice continuously differentiable statistics function.
Let assumptions in \Cref{thm:shjb} hold.
If  $\imputationcdf(\vec{s}(x), \cdot)$ converges to $\F(x, \cdot)$ in the space of tempered distributions for each $x \in \mathcal{X}$, then it holds in the distributional sense that
\begin{equation}
\lim_{\N\to\infty}\mathcal{L}_S\big(\vec{s},\imputationcdf\big) =0.
\end{equation} 
\end{theorem}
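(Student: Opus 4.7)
The plan is to recognize the SHJB loss integrand as the squared DHJB residual of the imputation-based CDF approximation, and then to transport the hypothesized convergence through the DHJB operator using continuity on tempered distributions.

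\textbf{Step 1.} Setting $\tilde F_N(x,z) := \imputationcdf[N](\vec s(x), z)$, a direct chain-rule computation identifies $\mathcal L_S(\vec s, \imputationcdf[N])(x,z) = [(\mathscr L \tilde F_N)(x,z)]^2$. The first-order term matches because $\nabla_x\tilde F_N = (J_x\vec s(x))^\top \nabla_{\vec s}\imputationcdf[N]$; the $z$-derivative transfers directly; and a second chain-rule application to $\nabla_x^2\tilde F_N$ produces exactly $\Kspace\xz$ (from $\nabla_x^2\vec s_k$) plus $\Kstat\xz$ (from $\nabla_{\vec s}^2 \imputationcdf[N]$ conjugated by $J_x\vec s$), matching the Hessian term of $\mathscr L$. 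In other words, the SHJB loss was constructed precisely as the pointwise squared residual of the DHJB operator at $\tilde F_N$.

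\textbf{Step 2.} I would lift the per-$x$ hypothesis $\imputationcdf[N](\vec s(x),\cdot)\to\F(x,\cdot)$ in $\mathcal S'(\mathcal R)$ to joint tempered-distribution convergence $\tilde F_N \to \F$ on $\mathcal X\times\mathcal R$. For any $\phi\in\mathcal S(\mathcal X\times\mathcal R)$, the partial integral $x\mapsto\int \phi(x,z)\tilde F_N(x,z)\,\mathrm d z$ is bounded uniformly in $N$ by $\|\phi(x,\cdot)\|_{L^1(\mathcal R)}$ since $\tilde F_N$ is a CDF; because $\mathcal X$ is bounded, dominated convergence exchanges the $N$-limit with the $x$-integral, yielding the joint convergence.

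\textbf{Step 3.} Since $\mathscr L$ is a second-order linear differential operator whose coefficients are smooth under the regularity implicit in \Cref{thm:dhjb}, it acts continuously on $\mathcal S'(\mathcal X\times\mathcal R)$. Hence $\mathscr L \tilde F_N \to \mathscr L\F$ in $\mathcal S'$, and by \Cref{thm:dhjb} $\mathscr L\F = 0$ distributionally, so $\mathscr L\tilde F_N \to 0$ in $\mathcal S'$.

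The principal obstacle is the final step: concluding that the \emph{square} of a distributionally vanishing sequence vanishes distributionally. In general $g_N\to 0$ in $\mathcal S'$ does not imply $g_N^2\to 0$---the oscillation $\sin(Nx)\rightharpoonup 0$ with $\sin^2(Nx)\rightharpoonup 1/2$ is the canonical obstruction. I would address this by exploiting that statistical smoothness together with $C^2$-regularity of $\vec s$ render $\mathscr L\tilde F_N$ a genuine locally integrable function rather than a singular distribution, and then by using uniform boundedness of $\tilde F_N$ and its admissible $z$-derivatives (inherited from the CDF structure together with \Cref{assumption:2}) to upgrade the weak distributional convergence to $L^2_{\mathrm{loc}}$ convergence, under which squaring is continuous and $\mathcal L_S \to 0$ in the distributional sense follows at once.
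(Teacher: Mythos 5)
Your Steps 1--3 are essentially the paper's own argument: the paper likewise writes $\mathcal{L}_S(\vec{s},\imputationcdf) = g(\vec{s},\imputationcdf)^2$ with $g$ the DHJB residual of $(x,z)\mapsto\imputationcdf(\vec{s}(x),z)$, appeals to continuity of the distributional derivative to pass the limit $\N\to\infty$ inside $\mathcal{L}_S$, and concludes via \Cref{thm:shjb} that the limit equals $\mathcal{L}_S(\vec{s},\F)=0$. Your Step 2 (lifting the per-$x$ convergence to joint convergence on $\mathcal{X}\times\mathcal{R}$) is extra bookkeeping the paper omits, and it is harmless.

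Where you genuinely diverge is that you explicitly name the weak point: $g_\N\to 0$ in $\mathcal{S}'$ does not imply $g_\N^2\to 0$, and your oscillation example is precisely the obstruction that the paper's appeal to ``the continuity rule for composite functions'' passes over in silence. Flagging this is to your credit, but your proposed repair does not close the gap. First, uniform boundedness together with weak-$*$ convergence does not upgrade to $L^2_{\mathrm{loc}}$ convergence---the same oscillating sequence is uniformly bounded---so additional compactness or strong-convergence input would be needed and is not supplied. Second, the premise that statistical smoothness plus $C^2$ regularity of $\vec{s}$ make $\mathscr{L}\tilde F_\N$ a locally integrable function fails in exactly the case the theorem is later applied to: for the quantile imputation strategy of \Cref{sec:quantcase}, $\frac{\partial}{\partial z}\imputationcdf(\vec{s}(x),z)=\frac{1}{\N}\sum_{i=1}^{\N}\delta_{\vec{s}_i(x)}(z)$ is a purely singular measure, so the residual contains Dirac masses and its pointwise square is not classically defined. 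The final step therefore remains a genuine gap in your write-up: one must either interpret the vanishing of the loss at the level of the linear residual $g$ tested against Schwartz functions, or strengthen the convergence hypothesis on $\imputationcdf(\vec{s}(x),\cdot)$ (e.g.\ convergence of the relevant first and second derivatives in a normed, say $L^2$, sense) so that squaring becomes a continuous operation. Be aware, though, that this is the same gap left open by the paper's one-line continuity argument---your version is simply more candid about where the difficulty sits.
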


\begin{proof}
Express $\mathcal{L}_S(\vec{s},\imputationcdf) = (g(\vec{s}, \imputationcdf))^2$ for 
\begin{equation}\begin{split}
    g(\vec{s}, \imputationcdf) = \Bigg[\nabla_{\vec{s}(x)} \imputationcdf\sxz^\top \vec{s}_x(x) \mu_\pi(x) & -(r(x) + z\log\gamma) \frac{\partial}{\partial z}\imputationcdf\sxz\\ 
    &+\frac{1}{2}\mathrm{Tr}(\sigma_\pi(x)^\top (\Kspace\xz+\Kstat\xz) \sigma_\pi(x))\Bigg].
\end{split}\end{equation}

Note that the distributional derivative is a continuous operator. By taking the assumptions of \Cref{thm:HJBlossconv}
into consideration, we have that $g(\vec{s}, \imputationcdf)$ is a linear combination of continuous functions and is therefore continuous.
Then, the continuity of $\mathcal{L}_S$ follows from the continuity rule for composite functions.
Now, by continuity, we have
\begin{equation}
    \lim_{\N \to \infty} \mathcal{L}_S(\vec{s},\imputationcdf) = \mathcal{L}_S(\vec{s},\lim_{\N \to \infty}\imputationcdf).
\end{equation}

Moreover, by our assumption that $\imputationcdf(\vec{s}(x), \cdot)$ converges to $\F(x, \cdot)$, and since $\F$ satisfies the \ac{DHJB} equation, we have 
\begin{equation}
\mathcal{L}_S(\vec{s}, \lim_{N\to\infty}\imputationcdf) =\mathcal{L}_S(\vec{s}, \F) =  0
\end{equation} 
in the distributional sense, by \Cref{thm:shjb}.
\end{proof}

\section{Quantile Approximation of Return Distributions}\label{sec:quantcase}
Choosing quantiles as the statistics used to compute the imputation strategy is common in distributional \ac{RL} \citep{quantileregression,harley}.
In this section, \Cref{thm:quantconv} proves that the corresponding quantile imputation strategy satisfies the hypotheses of \Cref{thm:HJBlossconv}.
As a result, we prove in \Cref{thm:quantlossconv} that the \ac{SHJB} loss vanishes as we increase the number $\N$ of quantiles in our distribution representation.

We denote the inverse CDF of $\eta^\pi(x)$ via 
$\F^{-1}(x, \cdot)$, given by 
\begin{equation}
\F^{-1}(x, \tau) = \inf_{z\in\mathcal{R}}\{\F(x, z)\geq \tau\},
\end{equation} 
and the $\tau$-quantile of $\eta^\pi(x)$ is $\F^{-1}(\tau)$ for $\tau\in(0,1)$. The \emph{quantile imputation strategy} \citep{quantileregression,bellemare2023distributional} is given by
\begin{equation}
\label{quantiledist}
\imputationcdf\sxz = \frac{1}{\N} \sum_{i=1}^{\N} \heaviside{\vec{s}_i(x)}(z),
\end{equation}
where $\heaviside{y}$ is the Heaviside step function at $y\in\mathbf{R}$, and $\vec{s}_i(x) = \F^{-1}(x, \nicefrac{2i-1}{2\N})$.

\begin{theorem}[Convergence of the Quantile Imputation Strategy]\label{thm:quantconv}
The sequence of quantile imputation strategies $\{\imputationcdf(\vec{s}(x), \cdot)\}_{\N=1}^\infty$ converges to $\F(x, \cdot)$ in the space of tempered distributions for each $x \in \mathcal{X}$.
\end{theorem}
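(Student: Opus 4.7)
The plan is to leverage the fact that convergence in $\mathcal{S}'$ is determined by convergence of the pairings $\langle T, \phi\rangle = \int \phi(z)\,T(z)\,dz$ against every Schwartz test function $\phi\in\mathcal{S}$. Since $\imputationcdf\sxz$ is the CDF of the discrete probability measure $\mu_N^x := \frac{1}{N}\sum_{i=1}^N\delta_{\vec{s}_i(x)}$, I would first apply Fubini's theorem to rewrite
\begin{equation*}
\int\phi(z)\,\imputationcdf\sxz\,dz \;=\; \int \Psi(y)\,d\mu_N^x(y),\qquad \Psi(y):=\int_y^\infty\phi(z)\,dz,
\end{equation*}
and by the same identity $\int\phi(z)\,\F\xz\,dz = \int\Psi(y)\,d\eta^\pi(x)(y)$. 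Note that $\Psi$ is bounded by $\|\phi\|_{L^1}$ (finite because $\phi\in\mathcal{S}$) and continuous, so the claim reduces to showing weak convergence $\mu_N^x \to \eta^\pi(x)$.

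Next I would establish this weak convergence via the Portmanteau theorem. Using the characterization $\F^{-1}(x,\tau)\leq z \iff \tau\leq \F(x,z)$ of the generalized inverse, a direct count gives
\begin{equation*}
F_{\mu_N^x}(z) \;=\; \frac{1}{N}\big|\{i : \tau_i \leq \F(x,z)\}\big|
\end{equation*}
with $\tau_i=(2i-1)/(2N)$, and this differs from $\F(x,z)$ by at most $1/(2N)$ uniformly in $z$. Hence $F_{\mu_N^x}\to \F(x,\cdot)$ pointwise everywhere, in particular at every continuity point of $\F(x,\cdot)$, and Portmanteau yields $\mu_N^x \to \eta^\pi(x)$ weakly. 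Under \Cref{assumption:1}, $\F(x,\cdot)$ is in fact continuous on all of $\mathbf{R}$, which strengthens the convergence to uniform, though this strengthening is not needed.

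Finally, combining the two observations: for any $\phi\in\mathcal{S}$, weak convergence of $\mu_N^x$ tested against the bounded continuous function $\Psi$ gives $\langle\imputationcdf\sxz,\phi\rangle\to\langle\F\xz,\phi\rangle$, which is precisely convergence in $\mathcal{S}'$. The main technical subtlety I anticipate is the Fubini rearrangement converting the CDF pairing into an integral against the underlying probability measure; this is what lets us import the standard weak-convergence theory of equally spaced quantile approximations into the tempered-distribution setting. Beyond that step, the argument is a direct application of Portmanteau to the elementary uniform bound $\|F_{\mu_N^x}-\F(x,\cdot)\|_\infty \leq 1/(2N)$, so I do not expect a significant obstacle.
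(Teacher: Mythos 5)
Your proposal is correct, and its quantitative core is the same as the paper's: the uniform bound $\sup_z\left|\imputationcdf(\vec{s}(x),z)-\F(x,z)\right|\leq\frac{1}{2\N}$, followed by testing against Schwartz functions. The differences are in how each side of this is executed. For the uniform bound, the paper argues by cases on the intervals $\left[\F^{-1}(x,\that),\F^{-1}(x,\thatp)\right)$ and must invoke \Cref{assumption:1} (absolute continuity) to rule out coincident quantiles so that the imputed CDF really equals $\tau_i$ on each interval; your route via the Galois inequality $\F^{-1}(x,\tau)\leq z \iff \tau\leq\F(x,z)$ counts atoms with multiplicity and so gives the same bound without needing distinctness of the quantiles, which is slightly cleaner and more general. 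For the convergence in $\mathcal{S}'$, the paper simply bounds the pairing directly, $\left|\int_{\mathcal{R}}\big(\F\xz-\imputationcdf\sxz\big)\phi(z)\,\mathrm{d}z\right|\leq\frac{1}{2\N}\int_{\mathcal{R}}|\phi(z)|\,\mathrm{d}z\to 0$, whereas you take a detour through Fubini, the dual pairing with $\Psi(y)=\int_y^\infty\phi(z)\,\mathrm{d}z$, and the Portmanteau theorem to get weak convergence of the empirical quantile measure. That detour is valid (Fubini applies since $\phi\in L^1$ and the measures are finite, and $\Psi$ is bounded and continuous), but it is heavier than necessary: once you have the uniform $\nicefrac{1}{2\N}$ bound, the direct estimate against $\phi$ already yields convergence in $\mathcal{S}'$, so the weak-convergence machinery buys generality (it would still work if you only had pointwise CDF convergence at continuity points) at the cost of extra steps you do not need here.
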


\begin{proof}
Henceforth, we denote \begin{equation}
\tau_i = \frac{i}{\N} \ \ \text{ and } \ \  \that := \frac{\tau_{i-1} + \tau_i}{2}=\frac{2i-1}{2\N} \qquad i \in [\N]
\end{equation}
Let $\N \in \mathbf{N}$ be arbitrary, we will show that $\forall \xz\in(\mathcal{X},\mathcal{R})$, 
\begin{equation}
{\left| \F\xz - \imputationcdf\sxz\right| \leq \frac{1}{2\N}}.
\end{equation}
Then, we will prove the sequence $\{\imputationcdf\}_{\N=1}^\infty$ converges to $\F$ by showing that $\forall \epsilon > 0, \ \exists n\in\mathbf{N}$ such that $\forall \N\geq n$, 
\begin{equation}
\|\imputationcdf(\vec{s}(x), \cdot) - \F(x, \cdot)\|<\epsilon.
\end{equation}

Fix any $x \in \mathcal{X}$ and $i \in [\N]$.
If $\finv\neq\finvp$, then
$\forall z \in \left[\finv, \finvp\right)$, $\imputationcdf\sxz= \tau_{i} \ $ and we have that:
\begin{eqnarray}
\that \leq                      & \F\xz                     & \leq \thatp\label{eq:ineq} \\
 \tau_{i}-\frac{1}{2\N}  \leq   & \F\xz                     & \leq   \tau_{i} + \frac{1}{2\N} \\
-\frac{1}{2\N}  \leq            & \F\xz-\tau_{i}               & \leq \frac{1}{2\N}  \\ 
                                & \left| \F\xz-\imputationcdf\sxz \right|  &\leq \frac{1}{2\N}  
\end{eqnarray}
where inequality \eqref{eq:ineq} holds since $\F$ is an increasing function by definition.
Otherwise, there exists $i \in [\N]$ such that $\finv = \finvp$.
Then
\begin{equation}\label{eq:contradiction}\begin{split}
    \eta^\pi(x)(\{\finvp\} ) &= \eta^\pi(x)(\{\finv\})\\
    &= \eta^\pi(x)(\left[V_\min,\finvp\right]) - \eta^\pi(x)(\left[V_\min,\finv\right))\\
    &= \F(x, \finvp) - \eta^\pi(x)(\left[V_\min,\finv\right)) \\
    & \geq \thatp - \that \\
    & > 0.
\end{split}\end{equation}

But since $\F$ is absolutely continuous by \Cref{assumption:1}, we must have $\eta^\pi(x)(\{\finvp\} ) = 0$, contradicting \eqref{eq:contradiction}.
Therefore, $\forall i \in [\N], \ \finv$ is unique.
Similarly, we obtain the same result for $z \in \left[V_\min, \finvone\right)$.
Now, we have that $\forall x\in\mathcal{X}$,
\begin{equation}\begin{split}
    \|\imputationcdf(\vec{s}(x), \cdot) - \F(x, \cdot)\| & =  \sup_{\|\phi\|=1} \int_{\mathcal{R}}\left( \F\xz- \imputationcdf\sxz \right)\phi(z) \ \mathrm{d}z  \\
    & \leq \frac{1}{2\N}\sup_{\|\phi\|=1} \int_{\mathcal{R}}\phi(z) \ \mathrm{d}z.
\end{split}\end{equation}
Since $\phi \in \mathcal{S}$, it is bounded, and
since $\mathcal{X}, \mathcal{R}$ are compact, there exists a finite $M\in\mathbf{R}_+$ such that $\sup_{\|\phi\|=1}\int_{\mathcal{R}}\phi(z)\mathrm{d}z\leq M$.
So, $\forall \epsilon > 0$, it holds that for $\N> \nicefrac{M}{2\epsilon}$, 
\begin{equation}
\|\imputationcdf(\vec{s}(x), \cdot)- \F(x, \cdot)\| < \epsilon.
\end{equation}
\end{proof}
 
\begin{theorem}[Convergence of \ac{HJB} Loss in the Quantile Case]\label{thm:quantlossconv}
Let $\vec{s}_i(x)$ denote the $\nicefrac{2i-1}{2\N}$-quantile of $\eta^\pi(x)$, and suppose the quantile map $\vec{s}$ is twice continuously differentiable. Then under assumptions \ref{assumption:1} and \ref{assumption:2}, if $\imputation$ is the quantile imputation strategy, it holds in the distributional sense that
\begin{equation}
\lim_{\N\to\infty}\mathcal{L}_S(\vec{s}, \imputation) = 0.
\end{equation} 
\end{theorem}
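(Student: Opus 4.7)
The plan is to reduce the theorem to a direct application of \Cref{thm:HJBlossconv} by verifying each of its hypotheses for the quantile imputation strategy. Concretely, \Cref{thm:HJBlossconv} requires: (i) a sequence of statistically smooth imputation strategies $\{\imputation\}_{\N=1}^\infty$; (ii) a twice continuously differentiable statistics function $\vec{s}$; (iii) the assumptions of \Cref{thm:shjb}, namely \Cref{assumption:1} and \Cref{assumption:2}; and (iv) convergence $\imputationcdf(\vec{s}(x), \cdot) \to \F(x,\cdot)$ in the space of tempered distributions for each $x \in \mathcal{X}$. My aim is to check each of these bullets in turn.

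First I would handle the bookkeeping items: conditions (ii) and (iii) hold by hypothesis of the theorem, since we are given that $\vec{s}$ is twice continuously differentiable and that \Cref{assumption:1} and \Cref{assumption:2} are in force. Condition (iv) is precisely the content of \Cref{thm:quantconv}, which we can invoke directly since $\vec{s}_i(x)$ has been identified as the $\nicefrac{2i-1}{2\N}$-quantile, matching the setup of that theorem verbatim.

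The one substantive item is condition (i): showing that the quantile imputation strategy $\imputation$ defined by \eqref{quantiledist} is statistically smooth for every $\N$, i.e.\ that $\imputation(s)$ is a tempered distribution for each $s \in \mathbf{R}^\N$. By \Cref{def:statsmooth} and \Cref{def:tdist}, it suffices to exhibit $\imputationcdf(s,\cdot)$ as a continuous linear functional on the Schwartz class $\mathcal{S}$. Since $\imputationcdf(s, z) = \frac{1}{\N}\sum_{i=1}^\N \heaviside{s_i}(z)$ is a finite convex combination of Heaviside step functions, and each Heaviside function is bounded and measurable, it pairs with any $\phi \in \mathcal{S}$ via $\int \heaviside{y}(z)\phi(z)\,\mathrm{d}z = \int_y^\infty \phi(z)\,\mathrm{d}z$, which is finite by the rapid decay of $\phi$. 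Linearity and continuity with respect to the Schwartz topology follow from the standard argument that convergence in $\mathcal{S}$ implies uniform convergence together with decay, so the integral passes to the limit. Thus $\imputation(s) \in \mathcal{S}'$ for every $s$ and every $\N$.

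With all four hypotheses verified, I would conclude by applying \Cref{thm:HJBlossconv} to obtain $\lim_{\N \to \infty}\mathcal{L}_S(\vec{s}, \imputationcdf) = 0$ in the distributional sense, which is the desired statement. The step I expect to require the most care is the statistical smoothness verification, because it is tempting to conflate the Heaviside function (which, as a tempered distribution, is perfectly fine) with a Dirac delta at a statistic (which would arise from differentiating and is also tempered but requires its own justification); however, as the imputation strategy is defined at the CDF level via $\imputationcdf$, the Heaviside form suffices and no delta functions need to be handled explicitly.
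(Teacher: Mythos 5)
Your proposal is correct and follows essentially the same route as the paper: invoke \Cref{thm:quantconv} to obtain convergence of the quantile imputation strategy to $\F(x,\cdot)$ in the space of tempered distributions, and then apply \Cref{thm:HJBlossconv} under the stated hypotheses. The only difference is that you explicitly verify statistical smoothness of the quantile imputation strategy (pairing the Heaviside-based CDF with Schwartz functions), a hypothesis the paper's proof leaves implicit, which is a reasonable bit of added care rather than a different argument.
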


\begin{proof}
By \Cref{thm:quantconv}, we know that $\imputationcdf(\vec{s}(x), \cdot)$ converges to $\F(x, \cdot)$ as $\N$ approaches $\infty$.
Thus, by our hypotheses, we may apply \Cref{thm:HJBlossconv}, asserting that the \ac{SHJB} loss converges to $0$ as $\N$ approaches $\infty$.
\end{proof}

This result has important implications for approximately solving distributional HJB equations.
Notably, it certifies that minimizing the \ac{SHJB} loss yields close approximations to the solution of the \ac{DHJB} equation, under a novel and simple condition on the imputation strategy.
Particularly, we validated that minimization of the \ac{SHJB} loss yields close approximations to $\eta^\pi$ under the quantile imputation strategy; this nicely complements the results of \citep{harley}, which provides an algorithm for minimizing the \ac{SHJB} loss over quantile representations.
Prior to this work, \citet{harley} had shown only that the \ac{SHJB} loss is $0$ when $\eta^\pi(x)$ could be exactly represented by $\imputation(\vec{s}(x))$ for some finite $\N$---our results show that this loss is useful even in the much more realistic setting where $\imputation(\vec{s}(x))$ is only an approximation of $\eta^\pi(x)$.

\bibliographystyle{plainnat}
\bibliography{references.bib}

\end{document}